\renewcommand{\algorithmicrequire}{\textbf{Input:}}
\renewcommand{\and}{\mbox{ and }}
\newcommand{\inc}{\sim}
\newcommand{\AlternativeSet}{\mathcal{A}}
\newcommand{\TierFunction}{\gamma}
\newtheorem{proposition}{Proposition}
\newtheorem{example}{Example}
\title{Cautious Learning of Multiattribute Preferences}
\author{
Hugo Gilbert$^1$ \and
Mohamed Ouaguenouni$^2$ \and
Meltem Ozturk$^1$ \and
Olivier Spanjaard$^2$
\affiliations
$^1$Université Paris-Dauphine, Université PSL, CNRS, LAMSADE, 75016 Paris, France\\
$^2$Sorbonne Université, CNRS, LIP6, 75005, Paris, France
\emails
firstname.lastname@lamsade.dauphine.fr,\\
firstname.lastname@lip6.fr
}
\begin{document}

\maketitle

\begin{abstract}
This paper is dedicated to a cautious learning methodology for
predicting preferences between alternatives characterized by binary
attributes (formally, each alternative is seen as a subset of
attributes). By ``cautious'', we mean that
the model learned to represent the multi-attribute preferences is
general enough to be compatible with any strict weak order on the
alternatives, and that we allow ourselves not to predict some
preferences if the data collected are not compatible with a reliable
prediction. A predicted preference will be considered reliable if all
the simplest models (following Occam's razor principle) explaining the
training data agree on it. Predictions are based on an ordinal dominance relation between alternatives [Fishburn and LaValle, 1996]. The dominance relation relies on an uncertainty set encompassing the possible values of the parameters of the multi-attribute utility function. 
Numerical tests are provided to evaluate the richness and the reliability of
the predictions made. 
\end{abstract}

\section{Introduction}\label{sec:intro}
Preference elicitation or preference learning is an important step in setting up a recommender system for a Decision-Maker (DM). It usually consists in querying the DM, e.g. by asking her to assign alternatives to ordered categories.
By calling the learning procedure ``cautious'', we mean a procedure that complies with two principles that we now describe.

First, the sophistication of the learned multiattribute decision model should be adapted to fit the level of complexity of the stated preferences, hence the choice of a multiattribute utility function $f$ general enough to represent any order $\succ$ of preference, i.e., for any strict weak ordering $\succ$ on a set $\AlternativeSet$ of alternatives, there exists $f$ such that, for any pair $\{A,B\}\!\subseteq\!\AlternativeSet$, $f(A)\!>\!f(B)$ iff $A\!\succ\!B$. In particular, the multi-attribute model we use is able to model positive or negative interactions between attributes \cite{grabisch2008review}.
    
    Second, the predicted pairwise preferences should not depend on the partly arbitrary choice of precise numerical values for the parameters of the model but solely on the stated preferences, hence the design of an \emph{ordinal} learning procedure that maintains an isomorphism between the collected preferential data and the learned model (in the same spirit as ordinal measurement for problem solving \cite{bartee1971problem}) by using a polyhedron of possible values for the parameters, reflecting the uncertainty about them.
As a consequence of this latter principle, when predicting an unknown pairwise preference between two alternatives $A$ and $B$, apart from the predictions ``$A$ is preferred to $B$'' and ``$B$ is preferred to $A$'', it is possible that the model does not make a prediction due to a lack of sufficiently rich preferential data (the absence of prediction is preferred to a bad prediction, although a compromise  must obviously be made between the reliability of the prediction and the predictive power of the learned model).
\paragraph{Problem.} We consider a multiattribute preference elicitation problem, where the attributes are assumed to be binary. Most elicitation procedures make an assumption of a numerical model, defined \emph{a priori}, underlying the DM's preferences. The originality of our approach consists in allowing the model to be revised during the elicitation process, by modifying the parameters space. The set of model parameters is thus defined \emph{a posteriori} from the preference statements. 

\paragraph{A sparse model.} Following \citeauthor{Fishburn1996BinaryIA} \shortcite{Fishburn1996BinaryIA}, we consider an underlying numerical model $f$ where the value $f(S)$ of a set $S$ of attributes is an additive combination of parameters, one per subset $A$ of $S$: $f(S)\!=\!\sum_{A \subseteq S:A\in\theta} u_A$. While this model is general enough to model any strict weak ordering on the subsets of attributes, it is inherently intractable as there is a combinatorial set of parameters $u_A$. To keep a tractable set, 
similarly to the $k$-additive variant of this model (which only considers parameters $u_A$ for $|A|\!\le\!k$; \citeauthor{Fishburn1996BinaryIA} took $k\!=\!2$), we only consider a restricted family $\theta$ of subsets $A$.
We explore different strategies to design $\theta$ through the elicitation process. 
Our goal is to keep it minimal (in a formally defined sense), and yet general enough to fit the training set of pairwise preferences.

\paragraph{Cautious learning.} For each pair of alternatives, according to the collected preferential information, our learned model makes a \emph{cautious} prediction: it could either claim which alternative is preferred, or state that the collected information is not sufficient to conclude. In a nutshell, we only make predictions that are consistent with all the simplest models (following Occam's razor principle) able to explain the stated preferences.
The aim is to maximize the ratio of the number of correct preference predictions over the total number of predictions, while maintaining enough inference power.

\paragraph{Organization of the paper.} After giving a brief review of the related work in Section~\ref{sec:related}, we present the $\theta$-additive utility model in Section~\ref{sec:sparse}, as well as the ordinal dominance relation that is inferred if the parameters are only partially specified. In Section~\ref{sec:minimal}, we show how to compute the simplest model compatible the collected preferences. Finally, Section~\ref{sec:tests} is devoted to numerical tests on synthetic preference data.
\section{Related work}\label{sec:related}
Preference elicitation, which is part of the broader framework of \emph{preference learning} (see e.g. \citeauthor{furnkranz2003pairwise} \shortcite{furnkranz2003pairwise}), has been studied for a long time in AI, as a preliminary step in any automation of a recommendation task. 

We focus here on the elicitation of the parameters of a multiattribute utility function taking into account interactions between attributes (more precisely, learning a partial specification of these parameters yielding a dominance relation between alternatives). In contrast with the setting of active learning which has been widely studied for preference elicitation (see e.g. \citeauthor{guo2010multiattribute} \shortcite{guo2010multiattribute}), we do not assume interactions with a DM but only the knowledge of a ``static'' training set of examples of pairwise preferences.
In this passive learning setting, many classification-based approaches has been proposed, going from perceptrons \cite{abs-1711-07875} to Gaussian processes \cite{chu05preference} or Support Vector Machines (SVM) \cite{domshlak2005unstructuring}. These approaches have in common that they consider, as a training set, a set of triples $(A,B,c)$, where $A$ and $B$ are two alternatives and $c\!=\!1$ if $A\!\succ\!B$, and $c\!=\!0$ otherwise.

A well-known multiattribute utility model that takes into account interactions between attributes, and closely related to the decision model we study in this paper, is the Choquet integral. One of the most recent work about the elicitation of the parameters of a Choquet-related aggregation function integral is that of \citeauthor{Bresson2020Learning2H} \shortcite{Bresson2020Learning2H}, in which in particular a perceptron approach is integrated into the learning process of a 2-additive hierarchical Choquet integral \cite{Bresson2020Learning2H}.
For a broad literature review about learning the parameters of a Choquet integral, the reader may refer to the article by \citeauthor{grabisch2008review} \shortcite{grabisch2008review}.
Let us mention in particular the work by \citeauthor{marichal2000determination} \shortcite{marichal2000determination}, that use a polyhedron to characterize the set of parameters that are compatible with a training set of examples. The idea of defining a polyhedron of uncertainty on the parameters of a utility function goes back at least to the work of \citeauthor{charnetski1978multiple} \shortcite{charnetski1978multiple}. 
Their model state that $A\!\succ\!B$ if the proportion of parameters that give a better value for $A$ than for $B$ among those that are compatible with the stated preferences is greater than the proportion of parameters that give a better value for $B$ than for $A$.
This principle was also adapted to the case of a Choquet integral by \citeauthor{Angilella2015}~\shortcite{Angilella2015}. 
In the sequel, 
we will use a similar polyhedron.



The two works probably closest to our proposal are those of \citeauthor{domshlak2005unstructuring} \shortcite{domshlak2005unstructuring} and \citeauthor{bigot2012using} \shortcite{bigot2012using}. For binary attributes, \citeauthor{domshlak2005unstructuring} consider a multiattribute utility function that is a sum of $4^n$ subutilities over subsets of attribute values and develop an efficient SVM approach to reveal this utility function, by relying on a kernel method. \citeauthor{bigot2012using} study the use of generalised additively independent decompositions of utility functions \cite{fishburn1970utility,gonzales:hal-01492604}. They give a polynomial PAC-learner when a constant bound is known on the function's degree, where the degree is the size of the greatest subset of attributes in the decomposition. Yet, both works do not fit the ``cautious learning'' framework we consider here.
\section{Our Cautious Learning Setting}\label{sec:sparse}
\subsection{Multiattribute Decision Problem}
In this paper, we tackle a multiattribute decision problem where alternatives are expressed in the form of a vector of binary attributes. 
Let $\mathcal{F}\!=\!\{a_1, a_2, \ldots , a_n\}$ be $n$ binary attributes and $\AlternativeSet\!\subseteq\!\{0,1\}^n$ be the set of alternatives defined on  $\mathcal{F}$. 
By abuse of notation, for $a_i\!\in\! \mathcal{F}$ and $A\!\in\!\AlternativeSet$, we will write $a_i\!\in\!A$ if the $i^{th}$ component of the vector characterizing $A$ is 1.
Moreover, for a subset $S\!\subseteq\! \mathcal{F}$ of attributes, we will write $S\!\subseteq\!A$ if $a\!\in\!A$ for all $a\!\in\!S$. 
For instance, if $A$ corresponds to $(1,1,1,0)$, then $\{a_1,a_3\}\!\subseteq\!A$.


We assume that the DM has preferences in the form of a strict weak order over $\AlternativeSet$. 
For $A,B\!\in\!\AlternativeSet$, we write $A\!\succ\!B$ when $A$ is strictly preferred to $B$, and $A\!\sim\!B$ when neither $A\!\succ\!B$ nor $B\!\succ\!A$ (incomparability).

The aim of preference elicitation is to predict strict pairwise preferences from a training set of examples.
\subsection{The $\theta$-additive Model} 
\paragraph{Cardinal models and additive functions.} As the DM's preferences over $\AlternativeSet$ are modeled as a strict weak order, there exists a real-valued function $f$ such that $\forall A,B\!\in\!\AlternativeSet, f(A)\!>\!f(B)\!\Leftrightarrow\!A\!\succ\!B$. Many models assume that $f$ can be represented in a compact way using some sort of additive property. 

%
One of the simplest and most used cardinal models for preference modelling in multiattribute utility theory is the 1-additive model \cite{keeney1993decisions}. This model makes the strong assumption that we can find a utility $u(a)\!\in\!\mathbb{R}$ for each attribute $a\!\in\!\mathcal{F}$ such that for all $A\!\in\!\mathcal{A}$, $f(A)\!=\!\sum_{a \in A } u(a)$. 
This assumption is strong because it implies that there is no interaction between the attributes.  
A weaker assumption is that of \emph{$k$-additivity} where we suppose the existence of a parameter $u(S)\!\in\!\mathbb{R}$ for each $S\!\in\![\mathcal{F}]^k$, where $[\mathcal{F}]^k\!=\!\{S\!\subseteq\mathcal{F}\!:\!1\!\le\!|S|\!\le\!k\}$. 
Hence, in the $k$-additive model, for all $A\!\in\!\AlternativeSet$, $f(A)\!=\!\sum_{S \in [\mathcal{F}]^k} I_A(S)u_S$, where $I_A(S)\!=\!1$ if $S \!\subseteq\!A$ and 0 otherwise, and $u_S$ is an abbreviation for $u(S)$. 
For example, the $2$-additive model makes it possible to account for binary interactions (positive or negative). 
The $n$-additive model is general enough to represent \emph{any} strict weak order on $\AlternativeSet$ because it can represent any real-valued set function $f\!:\!2^{\mathcal{F}}\!\rightarrow\!\mathbb{R}$ \cite{grabisch2000equivalent}, provided that $f(\emptyset)\!=\!0$.  
However, it requires to specify $2^n\!-\!1$ parameters. 
We therefore restrict our attention to additive models requiring fewer parameters.

\paragraph{The $\theta$-additive model.} 
In this paper, we consider a more flexible model which we call the $\theta$-additive model. 
Given a set $\theta\!\subseteq\! 2^{\mathcal{F}}$, and a set function
$u\!:\!\theta\!\rightarrow\!\mathbb{R}$, this model assumes that $f$ is of the form $f(A)\!=\!\sum_{S \in \theta} I_A(S) u_S$, where $u_S$ stands again for $u(S)$. 
In this case, we may also use the notation $f_{\theta,u}(A)$ instead of $f(A)$.  
Hence, the 1-additive model is the special case in which $\theta$ is $[\mathcal{F}]^1$, and the  $k$-additive model is the special case in which $\theta$ is $[\mathcal{F}]^k$. 

\begin{example}\label{ex : intro theta model}
Let $\mathcal{F}\!=\! \{a_1,a_2,a_3,a_4\}$ be a set of 4 attributes, $\AlternativeSet\!=\! \{0,1\}^{4}$ and the preferences of the DM be the strict weak order $\succ$ defined by:
\begin{align*}
     & & (0,1,1,1) & \succ\!\!\!  & (1,0,1,1)& \succ\!\!\! & (1,1,0,1)& \succ\!\!\! & (0,0,1,1) \\ & \succ\!\!\! & (0,1,0,1) & \succ\!\!\! & (0,1,1,0)& \succ\!\!\! & (1,0,0,1) & \succ\!\!\! &  (1,0,1,0) \\ & \succ\!\!\! & (1,1,0,0) & \succ\!\!\! & (0,0,0,1) & \succ\!\!\! & (0,0,1,0) & \succ\!\!\! & (0,1,0,0) \\ & \succ\!\!\! & (1,0,0,0) & \succ\!\!\! & \mathbf{(1,1,1,1) } & \sim\!\!\! & (0,0,0,0) & \succ\!\!\! & \mathbf{(1,1,1,0)}.
\end{align*}
These preferences can be explained by a clear negative interaction when attributes $a_1$, $a_2$, and $a_3$ are chosen together (vectors in bold). 
Interestingly, instead of using a 3-additive model, which would require the definition of 14 parameters, one can use the $\theta$-additive model with $\theta = \{\{a_1\},\{a_2\},\{a_3\},\{a_4\},\{a_1,a_2,a_3\}\}$ and $ u_{\{a_1\}} = 1$, $u_{\{a_2\}} = 2$, $u_{\{a_3\}} = 3$, $u_{\{a_4\}} = 4$, $u_{\{a_1,a_2,a_3\}} = -10$. 
\end{example}

\subsection{The Ordinal Dominance Relation}
We assume that we only have access to a partial set $R$ of strict pairwise preferences provided by the DM. This set may contain only a few comparisons. Our aim is to use these comparisons (observed preferences) in order to infer other strict pairwise preferences on the set of alternatives.
We formalize $R$ as a set of pairs $(A, B)\!\in\!\AlternativeSet^2$ such that $A\!\succ\!B$. 

Moreover, given $\theta$, $U_{R}^\theta$ denotes the set of utility functions on $\theta$ that are compatible with the preferences observed in $R$:   
$$
U_{R}^{\theta} = \{u:\theta \rightarrow \mathbb{R} | \forall (A,B) \in R, f_{\theta,u}(A) > f_{\theta,u}(B)\}.
$$
Note that, for a given $\theta$, this set $U_{R}^{\theta}$ can be empty or composed of an infinity of possible utility functions on $\theta$. 
Notably, if this set is empty then the preferences of the DM cannot be represented by a $\theta$-additive function.

Viewing a $\theta$-additive function as a vector whose dimensions are the subsets $S$ in $\theta$, the set $U_{R}^{\theta}$ corresponds to the polyhedron defined by the following linear constraints in the $|\theta|$-dimensional parameter space (where each parameter $u_S$ corresponds to a dimension)\footnote{The right hand side of the constraint is here set to 1, but it could be set to any strictly positive constant as utilities $u_S$ are always compatible with $R$ to within a multiplicative factor.}:
\begin{equation}
\tag{P1}
\begin{split}
    \forall (A,B) \in R, \sum_{S \in \theta} I_A(S) u_S - \sum_{S \in \theta} I_B(S) u_S \ge 1. 
\end{split}
\end{equation}

For a given $\theta$, checking whether or not the preferences of the DM can be represented by a $\theta$-additive function can be evaluated in polynomial time by testing the consistency of the constraints in P1 (e.g., using a linear programming solver). 

We denote by $\Theta_{R}$ the set $\{\theta | U_{R}^\theta \neq \emptyset \}$, i.e., the $\theta$'s such that the preferences in $R$ are consistent with a $\theta$-additive function.
\begin{example}
Coming back to Example~\ref{ex : intro theta model}, setting $\theta = \{\{a_1\},\{a_2\},\{a_3\},\{a_4\}\}$ yields $U_{R}^\theta=\emptyset$. In contrast, setting $\theta_1 = \{\{a_1\},\{a_2\},\{a_3\},\{a_4\},\{a_1,a_2,a_3\}\}$ yields $U_{R}^{\theta_1}\neq \emptyset$. In this example, it can be shown\footnote{It has been computer tested by brute force enumeration.} that $\Theta_{R} = \{\theta : \theta_1 \subseteq \theta \}$.
\end{example}

As shown in the previous example, there may be several $\theta$ in $\Theta_{R}$. 
Moreover, for $\theta \in \Theta_{R}$, if $U_{R}^\theta$ is compounded of several compatible utility functions, then these utility functions may lead to quite different inferred preferences. 

\begin{example}\label{ex : intro theta model 2}
Let $\mathcal{F} = \{a_1,a_2,a_3,a_4\}$. Let us assume that, contrary to Example~\ref{ex : intro theta model}, we now only observe preferences on the singletons $\{a_1\},\{a_2\},\{a_3\},\{a_4\}$: 
\begin{align*}
    R =& \{ ((0, 0, 0, 1), (0, 0, 1, 0)) , ((0, 0, 1, 0),\\
            &(0, 1, 0, 0)), ((0, 1, 0, 0), (1, 0, 0, 0)) \}
\end{align*}
The two additive functions $u$ and $u'$ defined by $u(\{a_1\})\!=\!1$, $u(\{a_2\})\!=\!2$, $u(\{a_3\})\!=\!3$, $u(\{a_4\})\!=\!5$ and $u'(\{a_1\})\!=\!1$, $u'(\{a_2\})\!=\!3$, $u'(\{a_3\})\!=\!4$, $u'(\{a_4\})\!=\!5$ are both in $U_{R}^{\theta}$, but we infer $(1,0,0,1) \succ (0, 1, 1, 0)$ from $u$ while we infer $(0,1,1,0) \succ (1, 0, 0, 1)$ from $u'$. 
\end{example}

This example shows that, given $R$, choosing a specific function $u \in U_{R}^{\theta}$ can lead to infer  preferences on the rest of $R$ that are only related to this arbitrary choice and not from the observed preferences~\cite{bartee1971problem}. 
As we will present in next sections, our aim is to infer preferences for pairs which do not belong to $R$ in a reliable way. In this purpose, we turn to an ordinal model based on the observed preferences which are in $R$. 

Fishburn and Lavalle \shortcite{Fishburn1996BinaryIA} showed how one can obtain an ordinal dominance relation from an underlying partially specified 2-additive numerical model. We now explain how the idea can be extended to an underlying $\theta$-additive model.

For a given $\theta\!\in\!\Theta_{R}$, the ordinal dominance relation is denoted by $\succ_{\theta}^{R}$, and is independent from the choice of a specific $u\!\in\! U_{R}^\theta$. 
This binary relation is defined, for each pair $A,B$ in $\AlternativeSet$, by: 
$$
A \succ_{\theta}^R B \Leftrightarrow \forall u \in U_{R}^\theta, f_{\theta, u}(A) > f_{\theta, u}(B).
$$
Naturally, $(A,B)\!\in\!R\!\Rightarrow\!A\!\succ_{\theta}^R\!B$.
Nevertheless, note that binary relation $\succ_{\theta}^R$ is obviously partial, and we define the incomparability relation $\inc_\theta^R$ as: 
\begin{align*}
A \inc_\theta^R B &\Leftrightarrow 
\exists u , u' \in U_{R}^\theta, \\ 
&(f_{\theta, u}(A) \ge f_{\theta, u}(B) \and f_{\theta, u'}(B) \ge f_{\theta, u'}(A)).
\end{align*}
For any pair $A,B$ of subsets, one can test if $A\! \succ^R_\theta\!B$ in polynomial time, by considering the linear program where the objective function $\sum_{S \in \theta} u_S I_B(S)\!-\!\sum_{S \in \theta} u_S I_A(S)$ is maximized under constraints P1 (that characterize the set $U_\theta^R$ of compatible utility functions). The dominance $A\!\succ^R_\theta\!B$ holds iff the optimal value is strictly negative.

If $A\!\succ_\theta^R\!B$ then one can predict, based on $R$ and for a $\theta$-additive model, that $A$ is strictly preferred to $B$. If $A\!\inc_\theta^R\!B$ then no prediction is made.

\subsection{Sensitivity of the Ordinal Dominance Relation to Changes in $R$ or $\Theta$}

We now explore how the relation $\succ_\theta^R$ is modified when some new pairwise comparisons are added to $R$, or removed.
Interestingly, adding new pairwise comparisons to $R$ can only enrich binary relation $\succ_\theta^R$, provided the preferences remain representable by a $\theta$-additive function. Conversely, preference  $A \succ_\theta^R B$ cannot be reversed by removing pairwise comparisons from $R$. More formally:
\begin{restatable}{proposition}{propChangeTheta} 
Given a set $R$ of strict pairwise comparisons, and $\theta\!\in\!\Theta_R$, if $R'\!\subseteq\!R$, then we have: (i) $\theta \!\in\!\Theta_{R'}$; (ii) $A \!\succ_{\theta}^{R'}
\!B\!\Rightarrow\!A\!\succ_{\theta}^{R}\!B$; (iii) $A\!\succ_{\theta}^{R}\!B\!\Rightarrow\! \neg (B \succ_{\theta}^{R'} A)$.
\end{restatable}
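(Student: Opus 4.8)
The plan is to prove the three claims in order, since each relies on the previous one, and all follow from the definition of $U_R^\theta$ as the feasible set of the linear system P1 together with the elementary fact that adding constraints shrinks a feasible set.

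For (i), I would argue that if $R' \subseteq R$, then the linear constraints of P1 associated with $R'$ form a subset of those associated with $R$. Hence $U_R^\theta \subseteq U_{R'}^\theta$. Since $\theta \in \Theta_R$ means $U_R^\theta \neq \emptyset$, any $u \in U_R^\theta$ also lies in $U_{R'}^\theta$, so $U_{R'}^\theta \neq \emptyset$, i.e.\ $\theta \in \Theta_{R'}$. This also records the key containment $U_R^\theta \subseteq U_{R'}^\theta$ that drives (ii).

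For (ii), suppose $A \succ_\theta^{R'} B$, i.e.\ $f_{\theta,u}(A) > f_{\theta,u}(B)$ for all $u \in U_{R'}^\theta$. Since $U_R^\theta \subseteq U_{R'}^\theta$ by (i), the same inequality holds in particular for every $u \in U_R^\theta$, which is exactly $A \succ_\theta^R B$. (Here one should note $U_R^\theta \neq \emptyset$, so the statement is not vacuous, but even the vacuous case would be fine for the implication.)

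For (iii), I would reason by contradiction: suppose both $A \succ_\theta^R B$ and $B \succ_\theta^{R'} A$. By the same monotonicity argument as in (ii), $B \succ_\theta^{R'} A$ implies $B \succ_\theta^R A$, i.e.\ $f_{\theta,u}(B) > f_{\theta,u}(A)$ for all $u \in U_R^\theta$. But $A \succ_\theta^R B$ gives $f_{\theta,u}(A) > f_{\theta,u}(B)$ for all $u \in U_R^\theta$, and since $U_R^\theta \neq \emptyset$ (as $\theta \in \Theta_R$) we can pick some $u$ in it, yielding $f_{\theta,u}(A) > f_{\theta,u}(B) > f_{\theta,u}(A)$, a contradiction. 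Hence $\neg(B \succ_\theta^{R'} A)$. I do not anticipate a genuine obstacle here; the only point requiring a little care is to keep track of the direction of the set inclusion ($U_R^\theta \subseteq U_{R'}^\theta$, not the reverse) and to invoke non-emptiness of $U_R^\theta$ where needed so that no statement becomes vacuously true in an unintended way.
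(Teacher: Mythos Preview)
Your argument is correct and follows essentially the same route as the paper: both proofs rest on the observation that $R'\subseteq R$ entails $U_R^\theta\subseteq U_{R'}^\theta$, from which (i) and (ii) are immediate, and (iii) is derived from (ii) together with the asymmetry of $\succ_\theta^R$ (the paper phrases this last step as a contrapositive rather than a contradiction, but the content is identical). If anything, your version is slightly more explicit about the polyhedral inclusion and the role of non-emptiness of $U_R^\theta$.
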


We now study how the relation $\succ_\theta^R$ is modified when $\theta$ is restricted or extended. If $\theta$ is restricted, then the relation $\succ_\theta^R$ can only be enriched. Conversely, if $\theta$ is extended, then a preference $A \succ_\theta^R B$ cannot be reversed after the extension. 

\begin{restatable}{proposition}{propChangeThetaBis}\label{propChangeThetaBis}
For $\theta,\theta'\!\in\!\Theta_R$, if  $\theta'\!\subseteq\!\theta$, then we have: (i) $A\!\succ_{\theta}^R\!B\!\Rightarrow\!A \!\succ_{\theta'}^R\!B$; (ii) $A \!\inc_{\theta'}^R\!B\!\Rightarrow\!A\! \inc_{\theta}^R\!B$; (iii) $A\! \succ_{\theta'}^R\!B\!\Rightarrow\!\neg (B \!\succ_{\theta}^R\!A)$.
\end{restatable}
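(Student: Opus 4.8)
The plan is to rely on a single elementary observation: since $\theta'\!\subseteq\!\theta$, any utility function on $\theta'$ can be turned into a utility function on $\theta$ by padding it with zeros, without changing any of the values $f(\cdot)$. Formally, for $u'\in U_R^{\theta'}$ I would define its extension $\bar u:\theta\to\mathbb{R}$ by $\bar u_S = u'_S$ if $S\in\theta'$ and $\bar u_S = 0$ otherwise. Because $I_A(S)\bar u_S = 0$ for every $S\in\theta\setminus\theta'$, we get $f_{\theta,\bar u}(A) = f_{\theta',u'}(A)$ for all $A\in\AlternativeSet$; in particular $\bar u$ satisfies the constraints P1, hence $\bar u\in U_R^\theta$. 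So zero-padding gives a value-preserving embedding of $U_R^{\theta'}$ into $U_R^\theta$. (Note also that $\theta,\theta'\in\Theta_R$ makes both sets non-empty, which I use for (iii).)

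For (i), assume $A\succ_\theta^R B$, i.e.\ $f_{\theta,u}(A) > f_{\theta,u}(B)$ for all $u\in U_R^\theta$. Taking an arbitrary $u'\in U_R^{\theta'}$ and its zero-padding $\bar u\in U_R^\theta$ yields $f_{\theta',u'}(A) = f_{\theta,\bar u}(A) > f_{\theta,\bar u}(B) = f_{\theta',u'}(B)$; as $u'$ was arbitrary, $A\succ_{\theta'}^R B$.

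For (ii), I would first record that $A\inc_\theta^R B$ is equivalent to the conjunction $\neg(A\succ_\theta^R B)\wedge\neg(B\succ_\theta^R A)$: unfolding the definitions, $\neg(A\succ_\theta^R B)$ says there is $u\in U_R^\theta$ with $f_{\theta,u}(B)\ge f_{\theta,u}(A)$, $\neg(B\succ_\theta^R A)$ says there is $u'\in U_R^\theta$ with $f_{\theta,u'}(A)\ge f_{\theta,u'}(B)$, and the conjunction of these two existential statements is exactly the definition of $A\inc_\theta^R B$ (up to renaming $u$ and $u'$). Then (ii) is just the contrapositive of (i) applied in both directions: $A\inc_{\theta'}^R B$ gives $\neg(A\succ_{\theta'}^R B)$ and $\neg(B\succ_{\theta'}^R A)$, which by (i) imply $\neg(A\succ_{\theta}^R B)$ and $\neg(B\succ_{\theta}^R A)$, i.e.\ $A\inc_{\theta}^R B$. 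For (iii), since $\theta'\in\Theta_R$ there is some $u'\in U_R^{\theta'}$, and $A\succ_{\theta'}^R B$ forces $f_{\theta',u'}(A) > f_{\theta',u'}(B)$; its zero-padding $\bar u\in U_R^\theta$ then satisfies $f_{\theta,\bar u}(A) > f_{\theta,\bar u}(B)$, which directly witnesses $\neg(B\succ_\theta^R A)$ (equivalently, $B\succ_\theta^R A$ would give $B\succ_{\theta'}^R A$ by (i), contradicting $A\succ_{\theta'}^R B$ on the non-empty set $U_R^{\theta'}$).

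The argument is short, so there is no real obstacle; the only points needing care are the equivalence between the incomparability relation and the pair of negated dominances used for (ii), and keeping straight the direction of propagation — the inclusion $\theta'\subseteq\theta$ makes predictions carry over from the coarser parameter set $\theta'$ to the finer one $\theta$, not the reverse.
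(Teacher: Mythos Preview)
Your proof is correct and follows essentially the same route as the paper: the key observation is the zero-padding embedding of $U_R^{\theta'}$ into $U_R^\theta$, from which (i) is immediate, (ii) follows by contraposition (the paper just says ``by a similar argument''), and (iii) is the contrapositive of (i) combined with asymmetry of $\succ_{\theta'}^R$. Your write-up is in fact more explicit than the paper's, in particular the equivalence between $\inc_\theta^R$ and the pair of negated dominances and the care about non-emptiness of $U_R^{\theta'}$.
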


Note that many different $\theta$-additive models may be compatible with the collected preferences in $R$. In particular, if a $\theta'$-additive model is compatible with $R$, then any $\theta$-additive model such that $\theta$ extends $\theta'$ is also compatible with $R$. A natural way to decide which $\theta$-additive models to consider is to follow the inclusion relationship on $\Theta_R$, by considering the sets $\theta$ that are minimal w.r.t. inclusion. For computational efficiency, we will use a refinement of the inclusion relationship, that we detail in the next section.


\section{The Minimal Compatible Models and The Unifying Model}
\label{sec:minimal}

Note that there always exists a $\theta$ able to represent $R$; at worst, we can put all the subsets of $\mathcal{F}$ in $\theta$. Our choice of a specific $\theta$ among the various ones that yield a $\theta$-additive model able to explain the collected preferences in $R$ is guided by two criteria, namely: 
\begin{itemize}
    \item First, following the philosophical principle of parsimony that the simpler of two explanations is to be preferred (Occam's razor \cite{blumer1987occam}), we consider subsets $\theta$ that minimize the complexity of interactions between the attributes; to measure this complexity, we use the \emph{degree} of $f_{\theta,u}$, namely $\max\{|S|:S\in \theta\}$ (i.e., the greatest cardinality of a subset of interacting attributes).
    \item Second, if two different $\theta$ have the same \emph{degree}, we prefer the one having \emph{sparsest} representation \cite{zhang2015survey}, i.e., the one which minimizes $|\theta|$ (which corresponds to the number of non-zero parameters $u_S$).
\end{itemize}

This two criteria define a lexicographic binary relation on $\Theta_R$, refining $\subseteq$ and denoted by $\sqsubseteq_{lex}$.
We call $\theta\!\in\!\Theta_{R}$ which are minimal according to $\sqsubseteq_{lex}$, \textit{simplest} $\theta$ of $R$ and we denote by $\Theta_R^{\min}$ their set: $\Theta_R^{\min}\!=\!\{\theta \in \Theta_{R}| \nexists \theta', \theta' \sqsubseteq_{lex} \theta\}$.

Note that sometimes the simplest model may contain more elements than another model which has a bigger degree: 
\begin{example}
Let $R= \{(1,1, 0,0) \succ (0,0,1,1) , (1,1,0,0) \succ (1, 0, 1, 0)\}$. It is easy to see that we can find a $\theta$ with one element containing a subset of cardinality 2 ($\theta\!=\!\{\{a_1,a_2\}\}$), however we will prefer having a $\theta$ consistent with a 1-additive model even if there are more elements in it : $\theta'=\{\{a_1\}, \{a_2\}\}$ or $\theta''=\{\{a_1\}, \{a_3\}\}$ or $\theta'''=\{\{a_2\}, \{a_3\}\}$.
\end{example}


\subsection{Computation of $\Theta_R^{\min}$}

To compute the set $\Theta_R^{\min}$ from $R$, we perform an enumeration of all possible minimal $\theta$ sets by using Algorithm~\ref{alg : theta enumeration rec} called with $\Theta_R^{\min}\!=\! \theta\!=\!\emptyset$, and $\underline{\theta} \!=\!2^{\mathcal{F}}$.
The parameters used by Algorithm~\ref{alg : theta enumeration rec} are the list $\Theta_R^{\min}$ under construction, a representative $\underline{\theta}$ of $\Theta_R^{\min}$ used to test whether $\theta$ is minimal w.r.t. $\sqsubseteq_{lex}$, the current $\theta$ under examination (i.e., whose membership to $\Theta_R^{\min}$ is being guessed) and the set $R$ of collected preferences. \\
%

To perform this enumeration, we rely on : 
\begin{itemize}
    \item a depth first search strategy, where each node corresponds to a possible $\theta$, the root is initialized with $\theta\!=\!\emptyset$, and a node is expanded by investigating the possible sets $S$ that may break (i.e., invalidate) the certificate $I$ that $R$ is not compatible with a $\theta$-additive model 
    (lines 8 to 11 in Algorithm~\ref{alg : theta enumeration rec}); we explain below how a certificate $I$ is defined and determined.
    \item a pruning strategy consisting in exploring only nodes who correspond to sets $\theta$ that are not dominated by the ones in $\Theta^{\min}_R$ w.r.t. $\sqsubseteq_{lex}$ (lines 2-3, 10 in Algorithm~\ref{alg : theta enumeration rec}).
\end{itemize}

\renewcommand{\algorithmicrequire}{\textbf{Input-Output:}}
\begin{algorithm}[h!]
\begin{algorithmic}[1]
  \IF{$R$ can be represented by a $\theta$-additive model}
            \IF{$\theta$ $\sqsubseteq_{lex}$ $\underline{\theta}$}
                \STATE $\Theta_R^{\min} \leftarrow \{\theta\}$; 
                \STATE $\underline{\theta} \leftarrow \theta$;
            \ELSE
                \STATE $\Theta_R^{\min} \leftarrow \Theta_R^{\min} \cup \{\theta\}$;
            \ENDIF
    \ELSE
            \STATE Find certificate $I$ and preference set $C$ by solving $\mathcal{D}_{\theta}$;\\
            \FOR{$S\!\in\!\{T \subset A\setminus B : (A,B) \in C \text{ or } (B,A) \in C\}$}
                \IF{$S$ breaks certificate $I$ and not $\underline{\theta} \sqsubseteq_{lex} \theta\cup\{S\}$}
                    \STATE BuildThetaMin$(\Theta_R^{\min}, \underline{\theta}, \theta \cup\{S\}, R)$;
                \ENDIF
            \ENDFOR
    \ENDIF
\end{algorithmic}
\caption{BuildThetaMin($\Theta_R^{\min}$,  $\underline{\theta}$, $\theta$, $R)$ }
\label{alg : theta enumeration rec}
\end{algorithm}
\renewcommand{\algorithmicrequire}{\textbf{Input:}}


\paragraph{Determining if $R$ can be represented by a $\theta$-additive model (line 1 of Algorithm \ref{alg : theta enumeration rec}).} Given a parameter set $\theta$, the following linear program $\mathcal{P}_{\theta}$, where there is one positive variable $e_{A,B}$ for each pair $(A,B)$ in $R$, and one free variable $u_S$ for each set $S$ in $\theta$, determines if the set $R$ of observed strict preferences can be represented by a $\theta$-additive model: 
\label{pl}
\begin{align*}
    (\mathcal{P}_{\theta}) \min_{e_{A,B},u_S} \sum_{(A,B) \in R } e_{A,B} &&\\
    \sum_{S \in \theta} (I_A(S) - I_B(S)) u_S &\ge 1 - e_{A,B}, &\forall (A,B) \in R \\
    e_{A,B} &\ge 0, &\forall (A,B) \in R 
\end{align*}
The preferences in $R$ can be represented by a $\theta$-additive model if the optimal value of $\mathcal{P}_{\theta}$ is 0. 
Indeed, in this case we can find values for variables $u_S$ that respect all the preferences in $R$ without the help of the additional slack variables $e_{A,B}$. 

Program $\mathcal{P}_{\theta}$ is probably the most intuitive program to test if $R$ can be represented by the $\theta$-additive model. However, 
we will work instead on its dual $\mathcal{D}_{\theta}$:
\begin{align*}
    (\mathcal{D}_{\theta}) \max_{\lambda_{A,B}} \sum_{(A,B) \in R } \lambda_{A,B} &&\\
    \sum_{(A,B) \in R} (I_A(S)-I_B(S)) \lambda_{A,B} &=0,&\forall S \in \theta \\
    0\le \lambda_{A,B} &\le 1,& \forall (A,B) \in R
\end{align*}

If the optimal value of $\mathcal{D}_{\theta}$ is strictly positive, we must add at least another set to $\theta$ to represent the preferences in $R$. 

\paragraph{Finding a certificate (line 9 of Algorithm \ref{alg : theta enumeration rec}).} Let $I = (\lambda_{A,B}^* : (A,B) \in R)$ be an optimal solution to program $(\mathcal{D}_{\theta})$ such that $\sum_{(A,B) \in R } \lambda_{A,B}^* > 0$. 
Note that the values in $I$ make it possible to identify a set of preferences $C = \{(A,B) : \lambda_{A,B}^* > 0\}$ that cannot be represented by the current $\theta$-additive model, and that $I$ is in some sense a certificate for the incapacity to represent $C$ and thus $R$ (because $C\!\subseteq\!R$).  
In this case, one should add a set $T$ to $\theta$. 
This amounts to adding the constraint $\sum_{(A,B) \in R} (I_A(T)-I_B(T)) \lambda_{A,B} = 0$ to $\mathcal{D}_{\theta}$.
Importantly, note that this may only decrease the optimal value of $(\mathcal{D}_{\theta})$ if $\sum_{(A,B) \in R} (I_A(T)-I_B(T)) \lambda_{A,B}^* \neq 0$. 
Hence, the different candidates to add to $\theta$ will be precisely the sets $T$ that satisfy this condition.  
When adding such a set to $\theta$ we will informally say that we break $I$\footnote{This can be thought of as solving a separation problem, by providing an hyperplane separating $I$ from the polytope of $\mathcal{D}_{2^{\mathcal{F}}}$.}. 

\paragraph{Finding a set $S$ breaking $I$ (lines 10-14 of Algorithm \ref{alg : theta enumeration rec}).}
Note that a set $S$ breaking $I$ can always be found (even efficiently) as $R$ can be represented by any $\underline{\theta}$-additive model with $\{A,B : (A,B)\in R\}\subseteq \underline{\theta}$. Hence, a set $S$ breaking $I$ can always be found in $\{A,B : (A,B)\in R\}$. However, to keep $\theta$ ``simple'' we explore more systematically the sets that can break $I$ in order to find simple ones. In a nutshell, we enumerate all the sets in $\{S \subset A\setminus B : (A,B) \in C \text{ or } (B,A) \in C\}$.  
Indeed, each of these subsets may change the scores of sets appearing in $C$ and hence break the certificate $I$.  


\subsection{The Unifying Model}
\label{sec:unifying}

Instead of predicting $A\!\succ\!B$ if $A\!\succ_\theta^R\!B$ for all $\theta\!\in\!\Theta^{\min}_R$, we consider a single set $\theta$ ``synthesizing'' $\Theta^{\min}_R$ and infer preferences from it, because they are more easily explainable. 
An intuitive idea consists of taking the union of all the simplest $\theta$. We call this model \textit{unifying model} and denote it by $\theta ^*_R$ :
$$\theta ^*_R = \cup_{\theta \in \Theta_R^{\min}} \theta.$$
Using the unifying model, we guarantee not to contradict the preferences that are compatible with all the $\theta$ in $\Theta_R^{\min}$.
\begin{proposition}
Let $R$ be the set of observed preferences on the elements of $\AlternativeSet$, let  $\Theta_R^{\min}$ be the set of simplest $\theta$-models compatible with $R$ and $\theta ^*_R = \cup_{\theta \in \Theta_R^{\min}} \theta$, then $\forall \theta \in \Theta_R^{\min}, \forall A, B \in \AlternativeSet$ 
$$ A \succ_{\theta^*_R}^R B \Rightarrow A \succ_\theta^R B.$$
\end{proposition}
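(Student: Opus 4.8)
The plan is to reduce the statement to Proposition~\ref{propChangeThetaBis}(i), which already says that restricting the parameter set can only enrich the dominance relation. The only preliminary fact needed is that $\theta^*_R$ is itself a legitimate parameter set, i.e.\ that $\theta^*_R\in\Theta_R$, so that $\succ_{\theta^*_R}^R$ is well-defined and the proposition is applicable.

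First I would fix an arbitrary $\theta\in\Theta_R^{\min}$ and note that, by definition of the union, $\theta\subseteq\theta^*_R$. Next I would check $\theta^*_R\in\Theta_R$: since $\theta\in\Theta_R^{\min}\subseteq\Theta_R$, the set $U_R^\theta$ is nonempty, so pick any $u\in U_R^\theta$ and extend it to $\hat u$ on $\theta^*_R$ by setting $\hat u_S=0$ for every $S\in\theta^*_R\setminus\theta$. Then $I_A(S)\hat u_S=0$ for all these added sets, hence $f_{\theta^*_R,\hat u}(A)=f_{\theta,u}(A)$ for every $A\in\AlternativeSet$; consequently $\hat u$ satisfies all the constraints (P1) and $U_R^{\theta^*_R}\neq\emptyset$. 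This is just the ``extension preserves compatibility'' observation already used informally in Section~\ref{sec:sparse}.

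With $\theta,\theta^*_R\in\Theta_R$ and $\theta\subseteq\theta^*_R$, I would then apply Proposition~\ref{propChangeThetaBis}(i) with the inner set $\theta':=\theta$ and the outer set taken to be $\theta^*_R$; it yields exactly $A\succ_{\theta^*_R}^R B\Rightarrow A\succ_\theta^R B$ for all $A,B\in\AlternativeSet$. Since $\theta$ was an arbitrary element of $\Theta_R^{\min}$, this is precisely the claim.

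I do not expect a genuine obstacle here: the result is essentially a corollary of Proposition~\ref{propChangeThetaBis}. The only points worth spelling out are (a) the well-definedness of $\succ_{\theta^*_R}^R$, i.e.\ that a union of compatible parameter sets stays compatible, which the zero-padding argument above settles, and (b) orienting the inclusion correctly so that the fixed $\theta\in\Theta_R^{\min}$ takes the role of the smaller set $\theta'$ in the hypothesis of Proposition~\ref{propChangeThetaBis}.
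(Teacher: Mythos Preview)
Your argument is correct and is exactly the intended route: the paper does not spell out a separate proof for this proposition, treating it as an immediate consequence of Proposition~\ref{propChangeThetaBis}(i) together with the remark (made right after that proposition) that extending a compatible $\theta'$ preserves compatibility with $R$. Your zero-padding argument for $\theta^*_R\in\Theta_R$ is precisely that remark made explicit, and your application of Proposition~\ref{propChangeThetaBis}(i) with $\theta':=\theta\subseteq\theta^*_R$ is oriented correctly.
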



Unfortunately the inverse is not true, i.e, it is possible that $A\!\succ_\theta^R\!B$ for $\theta\!\in\!\Theta_R^{\min}$ but not $A\!\succ_{\theta^*_R}^R\!B$. Example~\ref{ex:counter_unifying} in appendix illustrates this point.
\section{Numerical Tests}\label{sec:tests}
Numerical tests were carried out on Google Colab (2 virtual CPU at 2.2GHz, 13GB RAM). The objective of these tests is twofold: 1) evaluating the \emph{accuracy rate} of the predictions, namely the number of correct pairwise preference predictions over the total number of predicted preferences, if the set $\theta$ is \emph{known} beforehand; 
%
2) evaluating the same metric if the set $\theta$ is \emph{unknown} beforehand and learned with Algorithm~\ref{alg : theta enumeration rec}.

\subsection{The Tier List Framework}
We place ourselves in an elicitation context where each query consists in asking the DM to position an alternative in a tier list of ordered classes (i.e., the worst alternatives in category 1, the second worst alternatives in category 2, etc.). 
Formally, we assume that the user gives us access to a function $\TierFunction :\AlternativeSet \rightarrow \mathbb{N}$ that associates each alternative to a class in the tier list such that $\TierFunction(A) > \TierFunction(B) \Rightarrow A \succ B$. Note that $\TierFunction(A) = \TierFunction(B)$ does not mean here that $A$ and $B$ are indifferent, but that the user do not know how to compare them.

Positioning one alternative in the tier list allows us to interactively collect numerous strict pairwise preference relations while keeping a low cognitive burden compared to asking for pairwise comparisons or for scores (one score per alternative).


\subsection{Synthethic Generation of a Tier List}
This section details our simulation of the creation of a tier list from a $\theta$-additive function modeling the DM's preferences.
%
\subsubsection{Sampling  a $\theta$-additive Function $f_{\theta, u}$} 
For sampling a function $f_{\theta, u}$, we first sample a set $\theta$ and then sample parameters $u_S$ for $S\!\in\!\theta$. 
More precisely, the generation of $\theta$ is achieved as follows. First, $\theta$ is initialised as the set of singletons $\{a_1\},\{a_2\},\ldots,\{a_n\}$, then we add $\lfloor \alpha \times (2^{|{\mathcal{F}}|} - |\mathcal{F}|) \rfloor$ subsets of attributes, where the coefficient $\alpha\!\in\![0,1]$ makes it possible to control the model's complexity: for $\alpha\!=\!0$, only the singletons are in $\theta$, which yields the simple additive utility model, and for $\alpha\!=\!1$, all subsets of attributes are present, with yields the most general utility model. 
%
Each subset $S$ is sampled according to a parameter $p\!\in\!(0,1]$:
\begin{enumerate}[noitemsep]
    \item Initialize $S$ as a singleton by uniformly sampling in $\mathcal{F}$.
    \item Uniformly sample another attribute in $\mathcal{F}$ and add it to $S$.
    \item Exit this process if $
    S\!=\!\mathcal{F}$.
    \item Exit this process with a probability $p$ otherwise go to 2.
\end{enumerate}
The expected size of each $S$ we add can be approximated by:
\begin{equation*}
    \mathbb{E}[|S|] = 2 + (1-p-(1-p)^{n-1})/p.
\end{equation*}
Table~\ref{tab:sizeS} gives some hint of the expected size of each $S$ according to $p$. Once $\theta$ is set, we sample the parameters $u_S$ for each $S\!\in\!\theta$ with a normal distribution $\mathcal{N}(0,\sigma)$. The sampling  of $f_{\theta,u}$ thus depends on three parameters $p$, $\alpha$ and $\sigma$. In the tests, $p$ varies in $[0.1, 0.9]$, $\alpha$ in $[0.1, 0.5]$, and we set $\sigma\!=\!100$.

\begin{table}[hbtp]
    \centering
    \begin{tabular}{cccccc}
    \hline
        $p$ & 0.2 & 0.4 & 0.6 & 0.8 & 1 \\
    \hline
        $\mathbb{E}[|S|]$ & 3.95 & 3.18 & 2.62 & 2.25 & 2.00 \\
    \hline
    \end{tabular}
    \caption{\label{tab:sizeS}Expected size of subsets $S$ w.r.t. $p$.}
\end{table}

\begin{example}
\label{ex:tests}
If $n\!=\!4$, $p\!=\!0.3$, $\alpha\!= \!0.1$, then $\lfloor 0.1(2^5\!-\!5) \rfloor\!=\!2$ subsets $S$ are sampled in addition to the singletons. This may yield the parameter values given in Table~\ref{tab:ex_parameter_values}.
\begin{table}[h!]
\centering
\begin{tabular}{|l|l||l|l|}
\hline
\textbf{Subset} & \textbf{Value} & \textbf{Subset} & \textbf{Value} \\ \hline
\{0\}           & 148.85    & \{4\}           & 191.00             \\ \hline
\{1\}           & 186.75    & \{1,3,4\}       & -26.80            \\ \hline
\{2\}           & 90.60     &  \{0,2\}         & 80.24           \\ \hline
\{3\}           & -86.12    & &            \\ \hline
\end{tabular}
\caption{\label{tab:ex_parameter_values}Example of parameter values.}
\end{table}   
\end{example}

\subsubsection{From $f_{\theta,u}$ to a Tier List} 
The function $\TierFunction\!:\! \AlternativeSet\!\rightarrow\!\mathbb{N}$ that simulates the user assignment of alternatives into a tier list, called tier function hereafter, relies on a parameter $t$ representing the number of categories. The range of scores $f_{\theta,u}(A)\!=\!\sum_{S \in \theta} u_S I_A(S)$ 
of alternatives $A$ is partitioned into $t$ equally-sized intervals between the min score $f_0\!=\!\min_{A \in \AlternativeSet} f_{\theta,u}(A)$ and the max score $f_{t}\!=\!\max_{A \in \AlternativeSet} f_{\theta,u}(A)$.
The function $\TierFunction$ is then defined by: 
\begin{equation*}
    \TierFunction(A) = \min\{1\le k\le t : f_{\theta,u}(A) \leq u_k \}.
\end{equation*}
Put another way, we associate to each subset the interval where its utility lies.
In general, the more categories we add, the less incomparabilities we will have (alternatives assigned to the same category), but the user will have to make more efforts to assign the alternatives to categories.

\begin{example}
Coming back to Example~\ref{ex:tests}, let $\AlternativeSet\!=\!\{0,1\}^n$. Then $\max_{A \in \mathcal{A}} f_{\theta, u}(A)\!=\!616.41$ and  $\min_{A \in \mathcal{A}} f_{\theta, u}(A)\!=\!-86.12$. Assume that one partitions into $t=3$ categories. The intervals are then
$[-86.12, 148.05],$ $(148.05,382.23]$ and $(382.23,616.41]$. Subset $\{1,2,3\}$ is then assigned to category $2$ because its utility $191.23$ belongs to $(148.05,382.23]$.
\end{example}
\subsection{Baseline Models}
In the following, the ordinal model studied in the paper is denoted by ORD. In this part, we will briefly introduce the baseline models to which ORD is compared.
\paragraph{Linear Programming Model (LPM).}
As a first baseline model, we compare our approach with the model consisting in setting parameters $u_S$ at their optimal values for the linear program $\mathcal{P}_{\theta}$ of page~\pageref{pl}, and predicting that $A \!\succ\!B$ if $f_{\theta,u}(A)\!>\!f_{\theta,u}(B)$. In the experiments, if $\theta$ is known beforehand, only the constraints set of $\mathcal{P}_{\theta}$ grows, while if $\theta$ is unknown, both the variables and the constraints may change when $R$ grows.
%
%
\paragraph{Support Vector Machine (SVM).} This baseline model is inspired by an approach proposed by \citeauthor{domshlak2005unstructuring} \shortcite{domshlak2005unstructuring}.
An SVM approach is a supervised learning method for binary classification: each example in the dataset is labeled by 0 or 1; an SVM is learned from the dataset, from which labels are inferred for new examples. In our setting, each preference $A\!\succ\!B$ in $R$ yields two examples: a $(2m\!+\!1)$-dimensional vector $(v_A^\theta, v_B^\theta, 1)$ and another vector $(v_B^\theta, v_A^\theta, 0)$. That is, the third component of $(v_A^\theta, v_B^\theta, c)$ is $c\!=\!1$ if $A$ is preferred to $B$, and $c\!=\!0$ if it is not. Note that, when inferring labels (and thus predicting preferences), it may happen that $(v_A^\theta, v_B^\theta)$ and $(v_B^\theta, v_A^\theta)$ get the same label (0 or 1). In this case, no strict preference is predicted.

\subsection{Experiment with a Known $\theta$}
In the first experiment, we compared the two above baseline models with our ordinal model when the $\theta$ used to generate the tier function $\gamma$ is known beforehand.
\paragraph{Used metrics.}
To evaluate the accuracy of each model, we rely on the following measures:
\begin{itemize}[noitemsep]
    \item Correct answers (C): an inferred preference $A\!\succ\!B$ is said to be correct if $\TierFunction(A)\!>\!\TierFunction(B)$.
    \item  Wrong answers (W): an inferred preference $A\!\succ\!B$ is said to be wrong if $\TierFunction(A)\!<\!\TierFunction(B)$.
\end{itemize}
Given a model (ORD, LPM or SVM), a preference between $A$ and $B$ is \emph{inferred} if the preference is \emph{not} already present in $R$ and the model states that $A\!\succ\!B$ or $B\!\succ\!A$ (but not both).
We denote by $T$ the total number of inferred preferences. Note that $C\!+\!W \!\le\! T$ because it may happen that $\gamma(A)\!=\!\gamma(B)$. 
The Absolute Correct Rate (ACR) is defined from $T$ and $C$:
$$
ACR = C/T
$$
\paragraph{Experimental setting.}
The experiment was conducted with $|\mathcal{F}|\!=\!5$, $t\!=\!12$ $\sigma\!=\!100$, and two sets of parameters $(\alpha,p)$, namely $(\alpha,p)\!=\!(0.1,0.9)$ and $(\alpha,p)\!=\!(0.3,0.7)$. Roughly speaking, the former set of parameters generates tier functions with low interactions, while the latter generates tier functions with high interactions. 
For each couple $(\alpha, p)$, we sample three random tier functions and, for each one, we train each model with a budget of $25$ assignments to categories. 
The test examples are generated as follows: we randomly sample 10 alternatives $A_1,\ldots,A_{10}$ in $\mathcal{A}$ and we consider all pairs $\{A_i,A_j\}$ for $i\!\neq\!j$. We count the number $T$ of inferred preferences for these pairs, and we evaluate the ACR. To smooth the results, they are averaged over 10 different tier functions, and 5 samples of ten alternatives for each of them. 
\paragraph{Results and discussion.}
The results are presented in Figures~\ref{fig:Exp1Set1} and \ref{fig:Exp1Set2}, where the x-axis gives the size of the training set and the curves show the mean and 95\% confidence interval. The curves show how the average number of inferred preferences and the average ACR evolve with the size of the training set (from 1 to 25 assignments of alternatives to categories). In both figures, we see that the number of inferred preferences grows more slowly with ORD than with LPM and SVM, in accordance with the principle of cautious learning. However, the accuracy is better, as reflected by the curve of ACR for ORD that is consistently above the curves obtained for LPM and SVM. As one could expect, when the interactions are high (Figure~\ref{fig:Exp1Set2}), and thus the number of parameters $u_S$ is significant, a larger learning set is required to make it possible to infer numerous pairwise preferences with ORD. Note that, when the number of assignments available in the training set is low, the confidence interval for the curve of ACR for ORD is wide. This is related to the fact that few preferences are inferred and therefore a wrong prediction drastically change the ACR. However, after 15 assignments, the number of inferred preferences becomes higher, and the ACR for ORD outperforms the ACR for LPM and SVM. Comparing Figure~\ref{fig:Exp1Set1} and Figure~\ref{fig:Exp1Set2}, we can even see that, after 25 assignments, the difference in ACR is greater with high interactions than with low interactions. We ascribe this to the fact that the three models behave similarly with low numbers of parameters $u_S$ ($|\theta|$ not far from $n$) because the polyhedron of compatible utilities is small. We also notice in the two figures that the number of inferred preferences is always greater with LPM and SVM than with ORD. Put another way, ORD represents a different trade-off between the number of preferences that can be predicted and their accuracy.
\begin{figure}[h]
    \centering
    \includegraphics[scale=0.55]{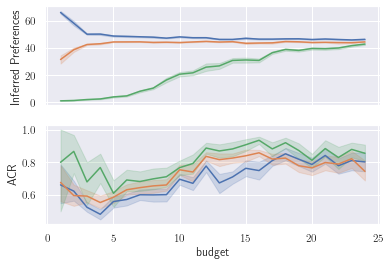}
    \caption{$T$ (top) and ACR (bottom) with ORD (green), LPM (blue) and SVM (orange) for a known $\theta$ and $(\alpha,p,t)\!=\!(0.1,0.9,12)$.} 
    \label{fig:Exp1Set1}
\end{figure}
\begin{figure}[h]
    \centering
    \includegraphics[scale=0.55]{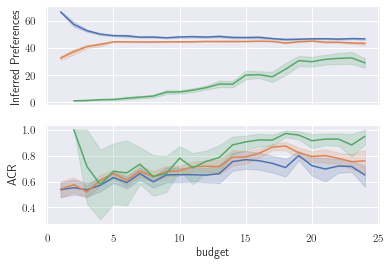}
    \caption{$T$ (top) and ACR (bottom) with ORD (green), LPM (blue) and SVM (orange) for a known $\theta$ and $(\alpha,p,t)\!=\!(0.3,0.7,12)$.} 
    \label{fig:Exp1Set2}
\end{figure}

\subsection{Experiment with an Unknown $\theta$}

In this section, we investigate the behavior of the models when $\theta$ is learned at the same time as parameters $u_S$ ($\forall S\!\in\!\theta$).

\paragraph{Experimental setting.}
The experimental setting is similar to the previous one, except that the number of categories in the tier lists is set to $t\!=\!9$. 
For all models (ORD, LPM and SVM), the set $\theta$ is updated after each assignment of an alternative to a category, by using Algorithm~\ref{alg : theta enumeration rec}. 

\paragraph{Results and discussion.}
The results are presented in Figures~\ref{fig:Exp2Set1} and \ref{fig:Exp2Set2}, with the same conventions as above. Similarly to the case of a known $\theta$, we see that model ORD outperforms models LPM and SVM in terms of accuracy. We notice small irregularities in the inferred preferences curve of ORD, due to the fact that ORD infers less preferences each time $\theta$ is updated because the polyhedron of compatible parameters expands when dimensions are added (corresponding to new subsets in $\theta$). Figure~\ref{fig:Exp3} shows the result of another experiment where the models are trained twice: once using the actual $\theta$ used to generate the synthetic preferences in $R$, and a second time using the $\theta$ obtained by computing a unifying model (see Section~\ref{sec:unifying}). 
Interestingly, both learning curves are close to each other, which tends to show that the learned $\theta$ is relevant.

\begin{figure}[!h]
    \centering
    \includegraphics[scale=0.55]{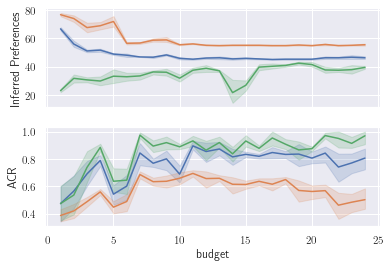}
   \caption{$T$ (top) and ACR (bottom) with ORD (green), LPM (blue) and SVM (orange) for an unknown $\theta$ and $(\alpha,p,t)\!=\!(0.1,0.9,9)$.}
    \label{fig:Exp2Set1}
\end{figure}

\begin{figure}[!h]
    \centering
    \includegraphics[scale=0.55]{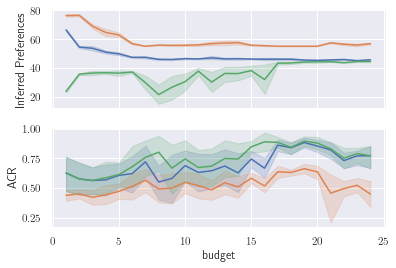}
    \caption{$T$ (top) and ACR (bottom) with ORD (green), LPM (blue) and SVM (orange) for an unknown $\theta$ and $(\alpha,p,t)\!=\!(0.3,0.7,9)$.}
    \label{fig:Exp2Set2}
\end{figure}

\begin{figure}[!h]
    \centering
    \includegraphics[scale=0.55]{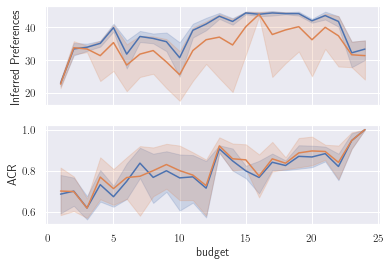}
    \caption{Number of inferred preferences and ACR with the real $\theta$ (in orange) and with $\theta^*_R$ (in blue), for  $(\alpha,p,t)\!=\!(0.3,0.7,9)$.}
    \label{fig:Exp3}
\end{figure}

\section{Conclusion}
We have presented here a ``cautious'' method for learning pairwise multiattribute preferences. The model we use is not restrictive, in the sense that any preference relation on the space of alternatives can be represented. The learning method achieves a trade-off between the number of predicted preferences and the accuracy of the predictions, by relying on an ordinal dominance relation between alternatives.

Several research directions are worth investigating, among which the adaptation of the approach to an active learning setting where one interactively determines a sequence of queries to minimize the cognitive burden for a DM, or the examination of other definitions of the set $\Theta_R^{\min}$ of simplest models compatible with $R$.

\section*{Acknowledgements}
We acknowledge a financial support from the project THEMIS ANR20-CE23-0018 of the French National Research Agency (ANR).


\bibliographystyle{named}
\bibliography{ecml2022}

\newpage
\appendix
\section*{Appendix}

\propChangeTheta*
\begin{proof} 
$(i)$ If all the preferences in $R$ can be represented by a $\theta$-additive function, then so can the preferences in $R'$ as $R'$ is compounded of a subset of the preferences in $R$. 

$(ii)$ If the preferences in $R'$ imply that $A$ should be necessarily strictly preferred to $B$, then $R$ will imply the same condition as $R$ also contains the same preference constraints as in $R'$. 

$(iii)$ The contrapositive is proved as follows: $B\!\succ_\theta^{R'}\!A\!\Rightarrow\!B\succ_\theta^R\!A$ by $(ii)$, and $B\succ_\theta^R\!A\!\Rightarrow\!\neg(A\succ_\theta^R\!B)$ because strict preferences are asymmetrical. 
\end{proof}

\propChangeThetaBis*
\begin{proof}
$(i)$ is true because if $f_{\theta, u}(A) > f_{\theta, u}(B)$ for all $u \in U_{R}^\theta$, then we should also have $f_{\theta', u}(A) > f_{\theta', u}(B)$ for all $u \in U_{R}^{\theta'}$. Indeed, each element of $U_{R}^{\theta'}$ can be seen as a utility function in $U_{R}^{\theta}$ in which the parameters $u_S$ are set to 0 for $S\in \theta\setminus \theta'$.

$(ii)$ follows by a similar argument as for $(i)$. 

$(iii)$ The contrapositive is proved as follows: $B\!\succ_\theta^{R}\!A\!\Rightarrow\!B\succ_{\theta'}^R\!A$ by $(i)$, and $B\succ_{\theta'}^R\!A\!\Rightarrow\!\neg(A\succ_{\theta'}^R\!B)$ because strict preferences are asymmetrical.
\end{proof} 

\begin{example}
\label{ex:counter_unifying}
 Let's take $\mathcal{A} = \{0, 1\}^4$ ($\mathcal{F} = \{a_1, a_2, a_3, a_4\}$) and observed preferences $R$ as in the following: 
$$
(1,1,1, 0) \succ (0,0,0,1) \succ \emptyset \succ (0,1,1,0).\\
$$

We have $\Theta_R^{\min}\!=\!\{ \theta_1, \theta_2 \}$ with $\theta_1\!=\!\{a_1,a_3,a_4\}$, $\theta_2 \!=\!\{a_1,a_2,a_4\}$, and thus $\theta^* = \{a_1,a_2,a_3,a_4\}$.

The polyhedron resulting from $\theta_1$ is:
\begin{align*}
u_1 +  u_3 & > u_4 \\
u_1 + u_3 & > 0 \\
u_1 & > 0 \\
u_4 & > 0 \\
u_4 & > u_3 \\
0 & > u_3
\end{align*}

From $u_3\!<\!0 $ and $u_1 + u_3\!>\!0$ it results that $u_1\!>\!0$ and since $u_2\!=\!0$ because $a_2\!\not\in\!\theta_1$ we have
$$(1, 0, 0, 0)\!\succ_{\theta_1}^R\!(0, 1, 0, 0).$$

The polyhedron resulting from $\theta_2$ is: 
\begin{align*}
u_1 +  u_2 & > u_4 \\
u_1 + u_2 & > 0 \\
u_1 & > 0 \\
u_4 & > 0 \\
u_4 & > u_2 \\
0 & > u_2
\end{align*}

From $u_2\!<\!0$ and $u_1\!>\!0$ we have
$$(1,0,0,0) \succ_{\theta_2}^R (0,1,0,0).$$


Hence, $(1,0,0,0)$ is strictly preferred to $(0,1,0,0)$ for both $\succ_{\theta_1}^R$ and $\succ_{\theta_2}^R$. 

Yet, the polyhedron resulting from $\theta^*$ is 
\begin{align*}
u_1 +  u_2 + u_3 & > u_4 \\
u_1 + u_2 + u_3 & > 0 \\
u_1 & > 0 \\
u_4 & > 0 \\
u_4 & > u_2 + u_3\\
0 & > u_2 + u_3
\end{align*}


And we can verify that $$u=\{u_1 = 3; u_2 = 5; u_3 = -6; u_4 = 1\} \in U_R^{\theta^*}$$ and since $f_{\theta^*,u}(0,1,0,0) > f_{\theta^*,u}(1,0,0,0)$ the preference $(1,0,0,0)\!\succ_{\theta^*}^R\!(0,1,0,0)$ does not hold while $(1,0,0,0)$ is strictly preferred to $(0,1,0,0)$ for both $\succ_{\theta_1}^R$ and $\succ_{\theta_2}^R$.
\end{example}
\end{document}